\newtheorem{lem}{Lemma}
\journal{Journal of Parallel and Distributed Computing}
\begin{document}

\begin{frontmatter}

\title{A Novel Co-design Peta-scale Heterogeneous Cluster for Deep Learning Training}


\author[firstaddress]{Xin Chen\corref{cor1}}
\ead{chen1.xin@midea.com}

\author[firstaddress]{Hua Zhou\corref{cor1}}
\ead{hua.zhou@midea.com}
\author[firstaddress]{Yuxiang Gao}
\ead{yuxiang1.gao@midea.com}
\author[firstaddress]{Yu Zhu}
\ead{zhu.yu@midea.com}

\cortext[cor1]{These authors contributed equally to this work. Xin Chen (chen1.xin@midea.com) is the corresponding author. }

\address[firstaddress]{Emerging Technology Center, Midea Corporate Research Center, San Jose, CA, USA}

\begin{abstract}
Large scale deep Convolution Neural Networks~(CNNs) increasingly demands the computing power. It is key for researchers to own a great powerful computing platform to leverage deep learning~(DL) advancing.On the other hand, as the commonly-used accelerator, the commodity GPUs cards of new generations are more and more expensive. Consequently, it is of importance to design an affordable distributed heterogeneous system that provides powerful computational capacity and develop a well-suited software that efficiently utilizes its computational capacity. In this paper, we present our co-design distributed system including a peta-scale GPU cluster, called \textit{``Manoa"}. Based on properties and topology of Manoa, we first propose job server framework and implement it, named \textit{``MiMatrix"}. The central node of MiMatrix, referred to as the job server, undertakes all of controlling, scheduling and monitoring, and I/O tasks without weight data transfer for AllReduce processing in each iteration. Therefore, MiMatrix intrinsically solves the bandwidth bottleneck of central node in parameter server framework that is widely used in distributed DL tasks. Meanwhile, we also propose a new AllReduce algorithm, \textbf{G}PU\textbf{D}irect \textbf{R}DMA-\textbf{A}ware \textbf{A}llReduce~(GDRAA), in which both computation and handshake message are $O(1)$ and the number of synchronization is two in each iteration that is a theoretical minimum number. Owe to the dedicated co-design distributed system, MiMatrix efficiently makes use of the Manoa's computational capacity and bandwidth. We benchmark Manoa Resnet50 and Resenet101 on Imagenet-1K dataset. Some of results have demonstrated state-of-the-art.  

\end{abstract}

\begin{keyword}
Heterogeneous Cluster\sep Deep Learning\sep Convolutional Neural Networks \sep Job Server \sep SSGD
\end{keyword}

\end{frontmatter}

\section{Introduction}
\label{intro}
In recent years, deep Convolution Neural Networks~(CNNs) have prevailed in both academia and industry, and the CNNs models not only have outperformed most of traditional machine learning and pattern recognition techniques~\cite{lecun2015deep,schmidhuber2015deep,liu2017survey} such as computer vision~\cite{krizhevsky2012imagenet,he2016deep,bengio2009learning} and speech recognition~\cite{hinton2012deep,amodei2016deep} but also has ranked number one in the game of Go~\cite{silver2016mastering,silver2017mastering}. With increasing of training dataset size~\cite{russakovsky2015imagenet,cocodatset} and deep learning~(DL) models complexity~\cite{he2016deep,simonyan2014very,huang2017densely}, DL training has become one of the most computationally-demanding high performance computing~(HPC) applications. Consequently, works relevant to build powerful computing platforms in both hardware and software have become hot research fields. In this paper, we address our ad hoc solution, a co-design peta-scale distributed heterogeneous system including an affordable GPU cluster that is less than 1 million dollar, named \textbf{``Manoa"} and a novel job sever software framework, named \textbf{``MiMatrix"}, which is designed based on proprieties of Manoa. It also has other two advantages: 1) high density-all nodes and switches equipped in two 48U racks, and 2) high price and coverage speed ratio.

Compared to other gradient descent optimization algorithms~\cite{ruder2016overview}\footnote {\label{footnote1}In this paper, Asynchronous Stochastic Gradient Descent~(ASGD) and SSGD are categorized into two different optimization algorithms.}, Synchronous Stochastic Gradient Descent~(SSGD) has two major advantages: 1) Obtain the highest accuracy in most cases~\cite{zhang2018theory}; and 2) Guarantee to learn a function in polynomial time~\cite{daniely2017sgd,revistingSGD}. Particularly, most of less-resource methods such as distilling~\cite{distilling} and pruning~\cite{han2015learning,Laine2017} heavily rely on the accuracy of the pre-train models. Considered it, this paper targets at designing and developing a feasible solution to DL training based on SSGD. 

Prior to introduction to our work, we briefly revisit three major challenges of designing both a distributed system and parallel software framework listed as following:\footnote{\label{footnote2}In the rest of paper, if we don't point out the type of distributed DL training approach, the DL training means to SSGD approach.} 
\begin{enumerate}
	\item It is difficult to scale out DL training. Most of large scale CNNs training approaches use mini-batch training\footnote {\label{footnote10}In this paper, we take data parallel training processing.}. From computation's perspective, it should have a higher parallel efficiency with a bigger batch size and scale out to more nodes. On the contrary, too bigger size leads to slow the converge speed of the learning, and even low the accuracy of the trained models~\cite{szegedy2017inception, masters2018revisiting, cho2017powerai}. 
	\item The distributed DL training meets problems of computation bound~(convolution layers), memory bound~(fully-connected layer) and bandwidth bound~(data aggregation among workers)~\cite{zhang2016parallel}, alternately. It causes a dilemma of configuring the hardware of the system and developing and optimizing algorithms of CNNs training;
	\item The DL training process has frequent I/O operations and synchronizations, loading mini-batch and synchronizing model weight in each iteration, which greatly decreases the utilization of the whole system. In turn, it causes the poor performance of the system.
\end{enumerate}

In order to solve the three challenges mentioned above,  inspired by Harvard architecture, we design and build Manoa with a dedicated novel topology, which provides a powerful computational capacity, over 1.2 PetaFLOPS~(PFLOPS) single precision. Meanwhile, given the properties of Manoa, we develop and implement MiMatrix, which is a job sever framework and  maximizes the Manoa's capacity to expedite DL training.  In MiMatrix, we also propose a new parallel SSGD algorithm, referred to as \textbf{G}PU\textbf{D}irect \textbf{R}DMA-\textbf{A}ware \textbf{A}llReduce~(GDRAA). Its computation and handshake message are $O(1)$.  

As shown in Fig.\ref{fig:arch}, analogy to Harvard architecture, Manoa has a head node, referred to as \textbf{Job Server}, which acts on the role of control unit~(CU) for the whole system. The node has powerful CPUs dealing with task-intensive works such as job schedule, system monitoring, real-time visualization, and I/O tasks. Manoa also has sixteen computing nodes, referred to as \textbf{computing server}, that are designed as the role of arithmetic logic unit~(ALU) in Harvard architecture and takes on compute-intensive portions. Besides running forward and backward parts during training, the computer servers act as both client and server roles during AllReduce. Besides them, Manoa also has a 240TB storage node with RAID10. All nodes are connected by both InfiniBand~(IB) and Ethernet. 

We first propose and develop a \textbf{Job Server} parallel software framework, named MiMatrix. This software is co-designed with dedicated topology of Manoa for fully utilizing its computational capacity. Also we propose a novel GPU direct RDMA-aware AllReduce algorithm, SGRAA, short from \textbf{G}PU\textbf{D}irect \textbf{R}DMA-\textbf{A}ware \textbf{A}llReduce, and directly implement the algorithm with IB Verbs library~\cite{rdmaprogramming1}, referred to as \textit{ibverbs}.
\begin{figure}[!t]
\centering
\includegraphics[width=0.9 \textwidth]{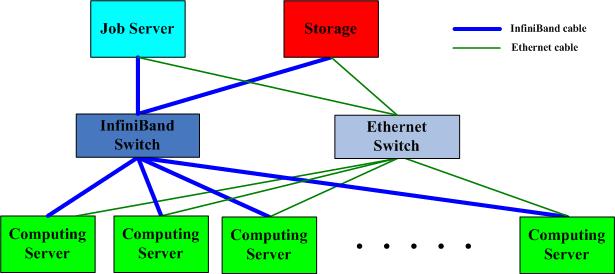}
\caption{Basic scheme of architecture of Manoa. Manoa consists of one job server, sixteen computing servers, one storage (120TB). One 10G Ethernet switch and one 56G InfiniBand switch. The data of models are directly transferred via InfiniBand with GPU Direct RDMA. The message passes through Ethernet.}
\label{fig:arch}
\end{figure}

Along with quick DL development, some distributed systems~\cite{coates2013deep,chilimbi2014project,kurth2017deep,bhattacharjee2017distributed} and parallel software frameworks~\cite{dean2012large,chen2015mxnet,caffe2,Paddlepaddle,MPItheano,tensorflow,CNTK,xing2015petuum} have been released successfully in recent years. However, all of them either design a cluster using some existing general DL SDKs or developed a general distributed SDK running on various distributed systems. It is obvious that the whole system cannot be fully optimized. To the best of our knowledge, Manoa and Mimatrix are firstly co-designed and co-developed in both hardware and software for accelerating distributed DL training.

In this paper, we make two major contributions as followings:
\begin{enumerate}
	\item To meet the increasing computational demands of DL training, we first co-design and co-develop a peta-scale heterogeneous cluster, Manoa, and a job server DL parallel framework, MiMatrix. Our system is a high coverage speed price ratio~(CS/P) and high-density cluster. The price of Manoa is 900K dollar\footnote {\label{footnote3} Price in March, 2017.} and less than 45\% of Nvidia DGX 1 solution\footnote {\label{footnote4} \url{https://www.nvidia.com/en-us/data-center/dgx-1/}}. All equipments are installed in two 48U racks. To the best of our knowledge, it is the highest density of a distributed GPU cluster for DL training.
	\item We propose a novel GDRAA algorithm, in which both computation and handshake message are $O(1)$. We implement our proposed algorithm with \textit{ibverbs}, natively, which fully utilizes the bidirectional bandwidth of all computer servers and reduces latency of data copy.
\end{enumerate}

The rest of the paper is organized as follows. Section~\ref{sec:system} conceptually describes how we design Manoa and MiMatrix and briefly introduces our software implementation. Section~\ref{sec:algorithm} details our proposed GDRAA AllReduce algorithm and proves its properties. In section~\ref{sec:results}, we present our experimental results and some analysis. This paper closes with a conclusion of our work and some future directions in section~\ref{sec:conclusion}.

\section{System Design and Software Implementation}
\label{sec:system}
In the beginning of this section, we address conceptual description of idea and consideration of the co-design distributed heterogeneous system. Following it, we describe our implementation of MiMatrix. 

\subsection{Description of system design and consideration}
A parallelism of a distributed heterogeneous cluster is generally categorized into three levels: 1): 1st level-\textit{worker level}, spanning across workers in a system; 2): 2nd level-\textit{processor level}, spanning across processors in a worker; and 3): 3rd level-\textit{core level}, spanning across cores in a processor. Since collective communications library (NCCL)~\cite{nccl} and CUDNN library~\cite{chetlur2014cudnn} have handled 2nd level and 3rd level tasks, respectively, we focus on 1st level parallelism design and development.

\textbf{Objective of Co-design System and Considerations} Our goal is to design and develop an affordable distributed system, which provides enough computational power to finish most of deep CNNs models training on a large scale training dataset in one day. We take Resnet101 and Resnet50~\cite{he2016deep} as the model benchmark and ImageNet-1K~\cite{russakovsky2015imagenet} as the dataset benchmark. Since GPUs are well-suited for the types of computation of deep CNN, GPUs are widely taken for DL training~\cite{chen2014big}. Additional advantages of GPUs over CPUs include more computational units and a higher bandwidth. Consequently, We choose GPUs as the accelerator of Manoa. 

In this paper, we designed Manoa in November 2016. At that time, the Nvida P100 GPU card is the best GPU card for DL training\footnote{\label{footnote5} At that time, the price of Nvidia Tesla P100 is about \$5,500 per card.} . Consequently, we took the Nvida P100 card with 16G memory as our GPU accelerator. Also, at that time, the last generation of CPU is Broadwell CPU that has forty slots. We selected Mellanox FDR56 InfiniBand adapter cards and switch. Given the number of InfiniBand switch and price, we chose 128 P100 GPU cards and 16 computing nodes, Each node has eight GPUs cards and two InfiniBand adapter cards.  

\textbf{Scaling Efficiency Measurement} Distributed DL system aims at speeding up to converge. As a result, any high computational scaling efficient without being measured to converge to a desirable accuracy is mindless for distributed system for machine learning. In this paper, we measure the scaling using a ratio between an acceptable accuracy and training time, as did~\cite{cho2017powerai}.

In order to measure the affordable machine, we first define a novel criterion, \textbf{p}rice and \textbf{c}onvergence \textbf{r}atio (PCR), defined as:
\begin{equation}
\label{eq:pc}
PC = \frac{1}{time} \times \frac{1}{price} 
\end{equation}
where time is how long the Resnet101 obtains 70\% or more accuracy on validation data of Imagenet 1K, and price is the how much the whole system hardware price, the unit is Kilo Dollar (K\$). PCR  reveal how many dollars one minute convergence demands. The bigger is better. 

\textbf{Manoa Components} The Manoa consists of one job server, sixteen computing servers, and one storage, all of which are connect by Ethernet and IB FDR56, as shown in Fig.\ref{fig:arch}. 

The storage has 120 TB storage of 240TB hard disk drives with RAID10. 

The job server has two Intel Xeon high-end Broadwell CPUs with 512 GB memory, and has over 2TB SSD storage.

As shown in Fig.~\ref{fig:archnode}, each of computing servers has two CPUs. The motherboard is non-uniform memory access~(NUMA). The computer server has eight Nivida Tesla GPUs, and four of them connect to one socket through two PCIE switches. Also there are one Infiniband host channel adapter~(HCA) located on each socket. P100 GPUs card has 9.3 TeraFLOPS single-precision performance and 18.7 TeraFLOPS half-precision performance, the computational capacity of only GPUs of the system has over 1.19 PetaFLOPS for single precision and 2.39 PetaFLOPS for half precision. Plus the CPUs of computer servers and job server, Manoa has over 1.2 PetaFLOPS for the single precision float. 

All equipments are installed in two 48U racks. To the best of our knowledge, it is the highest density of a distributed GPU cluster fro DL training~\cite{kurth2017deep,cho2017powerai}.
\begin{figure}[!t]
	\centering
	\includegraphics[width=0.9 \textwidth]{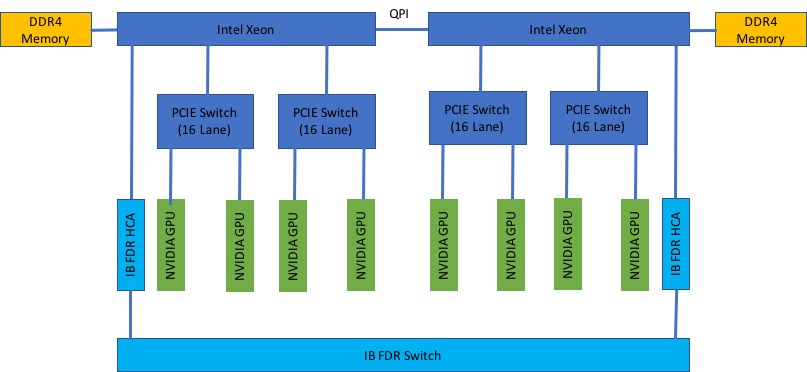}
	\caption{Architecture of a computing server. Each computing node has two Intel Xeon CPUs, each of which connects two PCIE switches and one IB HCA. Each PICE switch connects two Nvida Tesla P100 GPUs. The node is NUMA architecture. Each CPU has forty PCIE3 lanes, two PCIE switches takes thirty two lanes, and IB HCA uses remaining eight lanes.}
	\label{fig:archnode}
\end{figure}

\textbf{High Speed Interconnection} The data transfer system of our system is built with Mellanox FDR56 InfiniBand technology, which provides up to 56Gbit/s bandwidth and 4TB/s bi-direction throughput, and messages that control DL training through 10G Ethernet. As shown in Fig.~\ref{fig:archnode}, GPU cards connect CPU though PCIe 3. HCA cards connect nodes though IB switch. Since Nvidia P100 card supports GPUDirect remote direct memory access~(RDMA)that transfer data from P2P from GPU to GPUs directly, the memories of GPU and CPU of all nodes are broadly considered as one that connected by InfiniBand. 

\textbf{Three-level Data Cache} MiMatrix is designed a three-level data cache system. The first level is the memories of GPU and CPU connected with IB, in which the data is directly are feed into running DL training process. The second level is 2TB SSD of job server, in which data is the current training dataset that is loaded during running training tasks. The 2TB volume is enough for most of deep CNNs models training. And the third level is storage node which stores a variety of training datasets. Before a CNNs model is trained, the training data is copied to the second level data cache, SSD of the job server.

\subsection{Job Server and Software Implementation}
\label{sec:software}

\textbf{Job Server Framework} Parameter server framework is the most widely-used parallel DL software architecture~\cite{dean2012large,chen2015mxnet}. In it, a central node, referred to as parameter server, receives the model weights from all workers, and broadcasts the aggregating weights to all workers in each iteration, simultaneously. Two problems, stagger and bandwidth bottleneck, always lead to the poor performance.
 
According to hardware and topology of Manoa, we propose a novel job server software architecture, named MiMatrix. At the same time, we propose and implement a new AllReduce algorithm, \textbf{G}PU\textbf{D}irect \textbf{R}DMA-\textbf{A}ware \textbf{A}llReduce~(GDRAA), in which both computation and handshake message are $O(1)$, detailed in section~\ref{sec:algorithm}.

MiMatrix adopts message driven framework for the DL training. The job server acts on the central node and only receives and sends messages from and to computer servers and storage. The training process is executed by a protocol defined by users. Therefore, MiMatrix is flexible to any training approach with re-defining a new protocol. 

For updating model in each iteration, the computer sever is considered as both master and slaver. As illuminated in Fig.~\ref{fig:allreduce} and detail in Algorithm.~\ref{algorithm:DL}, each of computer severs not only sends part data of weights of this worker to other workers but also receives part data from other workers. After obtaining the data from all other computer severs, each server separately averages the data of the node and then broadcasts to other computer servers. Each of iteration only needs two synchronizations, which are the theoretical minimum number. 

\textbf{Software Implementation} We implement MiMatrix with C++11 in CPU part and CUDA 8.0~\cite{cudaprogram} with CUDNN 6.0 library~\cite{chetlur2014cudnn} on GPU part. The data transfer functions are directly written by \textit{ibverbs}~\cite{rdmaprogramming1}. The big advantage to implement on low level ibverbs api rather than MPI~\cite{mpiprogramming} or other InfiniBand's upper-layer protocols~(ULPs) such as over IB/SDP or RDS~\cite{rdmaprogramming,bedeir2010rdma,8291933} is to provide lower latency and allow for zero-copy transfer. 

In our implementation, there is a whole and continuous GPU memory registered by ibv\_reg\_mr, which is separated into two memories: Receiver Buffer~(RB) and Send Buffer~(SB), as shown in Fig.~\ref{fig:allreduce}. During running forward and backward part, the weights of the models are in SBs. The RB of each worker hold the parts of weights of other workers during weights aggregation. The data transfer directly calls ibv\_post\_send  among GPUs in different workers offloading CPU. 
\begin{figure}[htp!]
	\begin{minipage}[h]{1.0\linewidth}
		\centering
		\includegraphics[width=\linewidth, height=6cm]{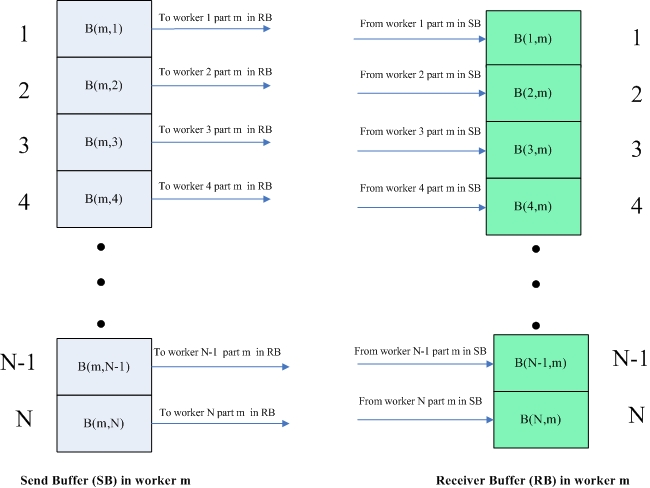}\\
		(a): Reduce Processing. 
	\end{minipage}
	\vspace{1.00mm}
	\begin{minipage}[h]{1.0\linewidth}
		\centering
		\includegraphics[width=\linewidth, height=6cm]{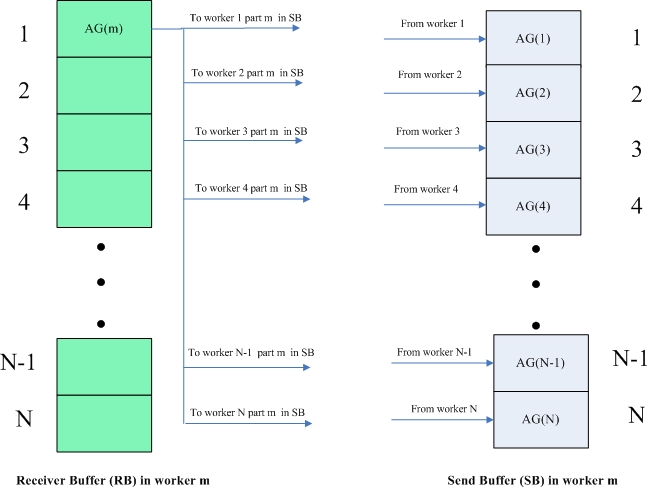}\\
		(b): Broadcast Processing. 
	\end{minipage}
	\hspace{0.00mm}
	\caption{Illumination of our proposed GDRAA AllReduce algorithm.  Example worker $m$ to explain our proposed algorithm. Image (a) shows reducing process. As a master, worker $m$ sends data in SB to other workers' RB, shown in left column. As a slaver, RB in worker $m$ receives data from other workers, shown in right column. Image (b) shows broadcast processing. After back propagation, the aggregating data of worker $m$ is in RB. As a master, the data is broadcasted to other workers, shown in left column. As a salver, worker $m$ receives data from other workers.}
	\label{fig:allreduce}
\end{figure}

\begin{algorithm}[!htp]
	\caption{GDRAA Algorithm.}
	\label{algorithm:DL}
	\begin{algorithmic}[1]
		\State Total N workers, each worker load different mini-batch data
		\State gradient of differential at worker $i$, $D(i)$
		\For {$Nu \leftarrow 0, EN$} \Comment{EN is maximum of iteration}
		\For {$i \leftarrow1, N $, each workers $W(i)$ parallel}
		\If {$Nu == 1$}
		\State Skip
		\Else
		\If {Worker $W(i)$ obtain $D(l), l\in[1,N]$ from all workers} 
		\State \Comment{1st synchronization}
		\EndIf
		\State \Comment{End of GDRW AllReduce}
		\State Update model with gradient of differential $D$ \Comment {Update the model}
		\EndIf
		\State Each worker $W(i)$ has model $M(Nu)$
		\State \Comment{Training task part}
		\State \Comment{Start of GDRW AllReduce part}
			\State Divide $D(i)$ is divided by $N$, and get $D(i,m), m\in[1,N]$ 
		\For {$j \leftarrow 1, N$}
		\State Send D(i, j) to worker $W(j)$\Comment{Shown in Fig.~\ref{fig:allreduce} (a)}
		\EndFor
		\If { Worker $W(i)$ obtain data $D(k,i), k\in[1,N]$ from all workers} 
		\State \Comment{2nd synchronization}
		\State Average $D(k,i), k\in[1,N ]$ and obtain $D(i)$
		\State send $D(i)$ to all workers \Comment{Shown in Fig.~\ref{fig:allreduce} (b)}
		\EndIf
		\EndFor	
		\EndFor
	\end{algorithmic}
\end{algorithm}

\section{GDRAA Algorithm}
\label{sec:algorithm}
In this section, we detail our proposed AllReduce algorithm, \textbf{G}PU\textbf{D}irect \textbf{R}DMA-\textbf{A}ware \textbf{A}llReduce(GDRAA), in Algorithm~\ref{algorithm:DL}. Basic idea is illuminated in Fig.~\ref{fig:allreduce}. Then, we prove that GDRAA is $O(1)$ in both computational complex and handshake message for the number of worker $N$. 

In real-word applications, the operator of data copy is compose of data transfer and latency. To be sure that the time of the data transfer dominates the whole time of data copy, our proposed algorithm is based on two assumptions:
\begin{enumerate}
	\item Compared to time of data transfer, the latency is tiny.
	\item The number of worker is small to  guarantee the summary of latency operators is small as well.
\end{enumerate}
The latency of data copy of Mellanox FDR56 IB is 0.7 $\mu$sec, and MiMatrix is designed for maximum 32 workers. Obviously, that our design satisfies the above two assumptions. 

GDRAA is a three-step algorithm: 1): Reduce, each workers obtains weights from all workers; 2): Aggregation, each worker averages these weights that it achieves; and 3): Broadcast, each worker sends the averaging data to all workers. Each worker allocate a continuous memory dividing two segmentations, send buffer (SB) and receiver buffer (RB), respectively. Both are divided into $N$ if the system has $N$ workers. As shown in Fig.\ref{fig:allreduce}, in Reduce step, the SB has $N$ part data, $B(i, j)$, $i$ is the worker number, $i\in[1,N]$ and $j$ is block number, $j\in[1,N]$. $B(i, j)$ is sent to block number $i$  at worker $j$. Take worker $m$ as a example, SB has N block data, $B(m, j)$,  $j\in[1,N]$ and this data is sent to worker $i$, block $m$. Meanwhile, $RB$ will receive $N$ data from $N$ workers. As shown in Fig.~\ref{fig:allreduce} (a) right column, RB will receives N data $B(j,m)$, $j$ means worker number. . For each worker, once obtaining data from all workers, the weights are averaged. Then it is the broadcast part, as shown in Fig.~\ref{fig:allreduce} (b), the worker sends the averaged result to all others. Meanwhile, the worker achieves the averaged data from other workers. We also take worker $m$ as a example, $AG(m)$ is the average data in worker $m$ and will broadcast to other workers at block $m$. Meanwhile, its SB receives all $AG(i)$, $i\in[1,N]$ from other workers. 

GDRAA operates reduce, aggregation and broadcast asynchronously, and has two synchronizations to wait data of each iteration. The synchronizations are designed based on  DL training instead of instead of AllReduce that most of existing distributed DL approaches have used. The synchronizations operator of GDRAA is as latest as the DL needs.  Consequently, the MiMatrix has greatly relieved the stagger problem.

The rest of this section, we proof that GDRAA has $O(1)$ in both computation and handshake message.
\begin{lem}
\label{lem:1}
	For a distributed system with $N$ workers, given the fixed size of memory of each worker, $L$. The handshake message is $O(1)$ of $N$. 
	\end{lem}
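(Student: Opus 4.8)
The plan is to interpret ``handshake message'' as the control/setup message that a worker must issue per AllReduce step to establish each point-to-point RDMA transfer, and to show that the \emph{size} of this message—the amount of metadata one worker posts to describe a single transfer—does not grow with $N$. The key observation is that the \textit{ibverbs} programming model decouples the handshake (the work-request descriptor posted via \texttt{ibv\_post\_send}) from the payload: a single work request carries a fixed-format scatter-gather entry plus addressing and completion metadata, whose bit-width is determined by the hardware and the verbs API, not by the number of participating workers. So the first step is to make precise what constitutes one handshake message in the GDRAA protocol, namely the descriptor naming a source address in $SB$, a destination block in a remote $RB$, a memory key, and a length field.

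Next I would bound the content of that descriptor. Because each worker allocates a single contiguous registered region of fixed size $L$, split statically into $N$ send blocks and $N$ receive blocks, the addressing of any individual transfer is expressed by a base pointer plus an offset and a length, each of which is a fixed-width integer determined by the register width of the HCA (and by $L$, which is held fixed), and a memory key returned once by \texttt{ibv\_reg\_mr} at setup time. None of these fields widens as $N$ increases: the offset still fits in the same address field, and the length of each block is $L/N$, which only shrinks. Hence the number of bytes in one handshake message is a constant $c$ independent of $N$, giving $O(1)$ per message.

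The step I expect to be the main obstacle is disambiguating ``message'' from ``number of messages'' and defending the claim against the reading that a worker issues $N$ handshakes per round (one per peer). The lemma as worded fixes the per-worker memory at $L$ and asks about \emph{the handshake message} in the singular, so the honest scope of the argument is the per-descriptor size, and I would state explicitly that the total descriptor count per round is a separate quantity treated elsewhere. The delicate point is therefore to tie the $O(1)$ bound cleanly to the two structural facts—fixed register width and the statically fixed region of size $L$—and to note that the shrinking block length $L/N$ can only reduce, never inflate, the length field, so no field in the descriptor depends on $N$. Assembling these observations yields that the handshake message is $O(1)$ in $N$, as claimed.
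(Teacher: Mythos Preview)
Your reading of ``handshake message'' as the size of a single \textit{ibverbs} work-request descriptor is not what the paper means, and so your argument---while internally coherent---does not prove the lemma as the authors intend it. In the paper's own proof, the quantity being bounded is the \emph{total data volume} a single worker sends and receives in each step of the AllReduce. The computation is simply that in the reduce step worker $m$ ships one block of size $L/N$ to each of the other $N-1$ workers, so the outgoing volume is $(L/N)(N-1)=L(1-1/N)\approx L$; symmetrically the incoming volume is $\approx L$, and the broadcast step is identical. Since $L$ is fixed, the per-worker communication is $O(1)$ in $N$.

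The gap in your proposal is therefore at the very first step: the interpretation. Your descriptor-size argument never touches the quantity the lemma is actually about, and the part you flag as ``the main obstacle''---distinguishing per-message size from number of messages---is precisely where you go wrong: the paper \emph{does} aggregate over all $N-1$ peers, and the cancellation $(L/N)\cdot(N-1)\to L$ is the whole point. Nothing about HCA register widths, memory keys, or \texttt{ibv\_reg\_mr} is needed or relevant. To fix the proof, drop the verbs-API framing entirely and just compute the send and receive volumes per worker per step as above.
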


\begin{proof}
\label{proof:1}
	For any worker, $m$, the data is divided into N. In the first step, considered worker $m$, the worker will send $L/N$ to other workers. Therefore, the size of data sent is: 
	\begin{equation}
	\label{eq:1}
	\frac{L}{N} \times (N-1) = L \times (1- \frac{1}{N} )\\
  	\simeq L 
	\end{equation}
	At the same time, the worker $m$ is receiving $\frac{L}{N}$ from total $N-1$. Therefore, the size of data received is
	\begin{equation}
	\label{eq:2}
	\frac{L}{N} \times (N-1) = L \times (1- \frac{1}{N} )\\
	\simeq L
	\end{equation}
	The Equation~\ref{eq:1} and Equation~\ref{eq:2} have shown that in first step before model data averaging, the data that worker $m$ sends and receives is both about $L$ data, which doesn't depend on the size of $N$.
	
	For the step 2, the proof the similar to the process of step 1.  
\end{proof}

\begin{lem}
	\label{lem:2}
	For a distributed system with $N$ workers, given any worker, the the computation complex of aggregation is  $O(1)$ of $N$. 
\end{lem}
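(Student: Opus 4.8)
The plan is to mirror the accounting used in Lemma~\ref{lem:1}, exploiting the same cancellation between block size and block count. First I would fix an arbitrary worker $m$ and describe precisely what it holds at the onset of the aggregation step. By construction of the Reduce step (Fig.~\ref{fig:allreduce}(a) and Algorithm~\ref{algorithm:DL}), worker $m$ ends up owning exactly the $m$-th block $B(j,m)$ contributed by every worker $j\in[1,N]$ — its own block together with the $N-1$ blocks received from the others — and each such block occupies $L/N$ of the registered memory. Thus the aggregation reduces to averaging $N$ vectors, each of length $L/N$, component-wise, producing a single block $D(m)$ of length $L/N$.

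Next I would count the arithmetic. For a single component of block $m$, forming the average requires $N-1$ additions followed by one division, i.e. $O(N)$ scalar operations per component. Since block $m$ has $L/N$ components, the total operation count on worker $m$ is
\begin{equation}
\label{eq:agg}
\frac{L}{N}\times\bigl((N-1)+1\bigr) = \frac{L}{N}\times N = L,
\end{equation}
which equals exactly $L$ and carries no dependence on $N$. Hence the per-worker aggregation cost is $O(1)$ in $N$, as claimed, using nothing beyond the fixed memory $L$ and the $N$-way partition.

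The step I expect to require the most care is stating the cancellation cleanly rather than the arithmetic itself: the number of values to be averaged grows linearly in $N$, while the length of each block shrinks like $1/N$, and the whole lemma hinges on these two effects being exactly inverse so that their product is the constant $L$. I would therefore make explicit that the total payload held by worker $m$ for aggregation is $N\times(L/N)=L$ regardless of $N$, and that a component-wise average touches each stored scalar only a constant number of times. With that observation the bound in Equation~\ref{eq:agg} follows immediately, closing the proof in direct parallel to Lemma~\ref{lem:1}.
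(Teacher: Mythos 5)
Your proposal is correct and follows essentially the same argument as the paper: worker $m$ averages $N$ blocks of size $L/N$, costing $\frac{L}{N}\times(N-1)$ additions plus $\frac{L}{N}$ divisions, which sums to exactly $L$ and is therefore independent of $N$. The only difference is presentational — you make explicit what data the Reduce step leaves on worker $m$ before counting, which the paper leaves implicit — but the accounting and the cancellation are identical.
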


\begin{proof}
\label{proof:2}
	In the aggregation of the distributed DL training, the operator is averaging. For any worker, $m$, the computational complex $Op$equals to
\begin{equation}
	\label{eq:3}
	Op = \frac{L}{N} \times (N-1) + \frac{L}{N}
\end{equation}
In Equation~\ref{eq:3}, the first part is the complex of add operator, and the second part is the complex of the multiply operator. We can rewrite it as:
 \begin{equation}
 \label{eq:4}
  Op= L
\end{equation}
For number of worker, $N$, the $L$ is a fixed. As a result, the computational complex is $O(1)$ and independence to $N$. 
\end{proof}

\section{Experimental Results and Analysis}
\label{sec:results}
\subsection{Configuration}
\label{sec:conf}
We took two Intel Xeon E5-2650V4 CPUs with 512GB memory in job sever and two Intel Xeon E5-2620V4 CPUs with 256GB memory and eight Nvidia Tesla P100 GPU with 16GB memory in the computing server. In computer servers, most of computational tasks running on GPUs. Low-end CPUs and low volume memory is enough. This design greatly reduces the cost of the system and increase PCR.

Manoa has sixteen computer servers and equipped in two 48U racks. Both job server and computing sever are 4U node. Operating system is CentOS 7. Version of the kernel is $3.10$. Version of Nvidia driver is $384.11$. The version of GCC is $4.8$.

In our training procedure, some parameters are set as following: the initial learning rate is $0.1$, momentum is $0.9$, weight decay is $0.001$, learning rate change policy is "ploy" and the gamma is $1$. Recently, some researchers has represented some tricks to quickly train extremely large minibatch~\cite{goyal2017accurate,you2017imagenet,akiba2017extremely}. Obviously, some of them can greatly speedup our training tasks. However, in this paper, we aim at benching the job server performance and present reproductive works. We don't try these tricks.  

\subsection{Results and Analysis}
\label{sec:exper}
As mentioned in section~\ref{sec:system}, the system performance was evaluated by the training time reaching an desirable  accuracy. In our paper, we conducted our experiments on ImageNet-1K dataset~\cite{russakovsky2015imagenet}, and we took ResNet with batch normalization~\cite{batch} our benchmark~\cite{he2016deep}.  We did two types of tests: 1) PRC measurement. The training task has 32 workers, each of which has 4GPUs. In our experiment, each GPU has 40 images. and 2) Scaling efficient. We measured the scaling efficient using Resnet50 on ImageNet-1K with 1,2,4,8,16 and 32 workers. The time obtained is that Resnet50 reaching 65\% on validation data, in which each GPU has 64 images and each worker has 4 GPUs\footnote {\label{footnote_time}All training times listed in this paper include saving models time of each epoch.}.

\begin{figure}[!t]
	\centering
	\includegraphics[width=0.9 \textwidth]{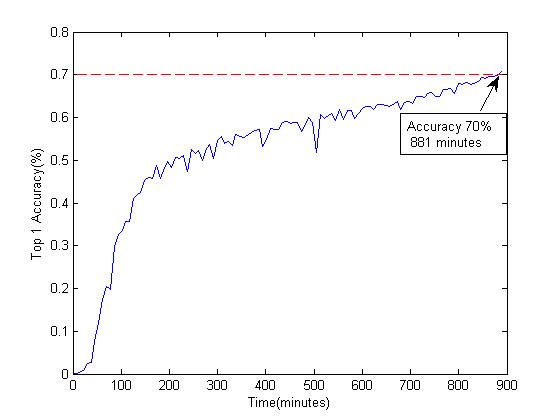}
	\caption{ResNet101 accuracy on validation data of ImageNet 1K. The mini-batch is 1,280. The time reaching 70\% accuracy is 888 minutes.}
	\label{fig:resnet101}
\end{figure}
As shown in Figure.~\ref{fig:resnet101}, Training time of ResNet101 models reaching 70\% accuracy is 888 minutes. The total price of Manoa is 900K dollar. The PRC of Manoa is 0.00000125. 
\begin{table}[!t]
	\label{table:scal}
	\caption{Scaling Efficiency of Manoa.}
	\centering
	\begin{tabular}{|c|c|c|}
		\hline
		Number of Worker& Time (minutes) &Scalability\\
		\hline
		1&4841&1\\
		\hline
		2&3039&1.59\\
		\hline
		4&1644&2.84\\
		\hline
		8&850&5.70\\
		\hline
		16&430&11.26\\
		\hline
		32&333&14.56\\
		\hline
	\end{tabular}
\end{table}
\section{Conclusion}
\label{sec:conclusion}
In this paper, we propose a co-design distributed heterogeneous cluster for speeding up DL training. In it, we successfully design and build a high-density 128-GPU cluster, named Manoa, and propose and develop a job server parallel framework for DL training, named MiMatrix, which effectively and efficiently utilizes the whole system. Our system achieves high ratio between speed converge and price. Compared to the parameter sever framework, the job server framework successfully solve the bandwidth bottleneck and stagger problems. Meanwhile, we also propose a novel AllReduce algorithm, GDRAA, which has $O(1)$ in both computation and handshake message. We conducted our experiments on ImageNet-1K dataset, and the performance has demonstrated state-of-the-art.

In the future, we are planning to investigate better hyper parameters such as adaptive learning rate policy and monument to improve our system performance. Also, we will obtain much results on larger-scale datasets, such as ImageNet22K and/or Places-365~\cite{place2} datasets. 

\section*{Acknowledgements}
Some of the technology described in this paper is patent pending. Many thanks to Super Micro Computer, Inc. for their help with building Manoa.

\section*{References}

\bibliography{gpuclusterbib,deeplearningbib}

\end{document}